\colorlet{shadecolor}{yellow}
\newtheorem{proposition}{Proposition}
\newcommand{\indicator}[1]{\mathbb{I}[#1]}
\newcommand{\pthat}[1]{\hat{p}_T^{(#1)}}
\newcommand{\pttrue}[1]{p_T(#1)}
\definecolor{lightblue}{rgb}{0.9, 0.95, 1.0}
\definecolor{mygreen}{rgb}{0.01, 0.5, 0.01}
\definecolor{myred}{rgb}{0.8, 0.01, 0.01}
\begin{document}

\begin{frontmatter}



\title{Physics-Informed Graph Neural Networks for Transverse Momentum Estimation in CMS Trigger Systems}

\author[1]{Md Abrar Jahin\corref{corauthor}}
\ead{jahin@usc.edu, abrar.jahin.2652@gmail.com}
\author[2]{Shahriar Soudeep}
\ead{20-43823-2@student.aiub.edu}
\author[2]{M. F. Mridha\corref{corauthor}}
\ead{firoz.mridha@aiub.edu}
\author[3]{Muhammad Mostafa Monowar}
\ead{mmonowar@kau.edu.sa}
\author[3]{Md. Abdul Hamid}
\ead{mabdulhamid1@kau.edu.sa}

\affiliation[1]{organization={Thomas Lord Department of Computer Science},
    addressline={University of Southern California}, 
    city={Los Angeles},
    state={CA},
    postcode={90089}, 
    country={USA}}
\affiliation[2]{organization={Department of Computer Science},
    addressline={American International University-Bangladesh}, 
    city={Dhaka},
    postcode={1229},
    country={Bangladesh}}
\affiliation[3]{organization={Department of Information Technology
},
    addressline={King AbdulAziz University}, 
    city={Jeddah},
    postcode={21589},
    country={Saudi Arabia}}

\cortext[corauthor]{Corresponding author}

\begin{abstract}
Real-time particle transverse momentum ($p_T$) estimation in high-energy physics demands algorithms that are both efficient and accurate under strict hardware constraints. Static machine learning models degrade under high pileup and lack physics-aware optimization, while generic graph neural networks (GNNs) often neglect domain structure critical for robust $p_T$ regression. We propose a physics-informed GNN framework that systematically encodes detector geometry and physical observables through four distinct graph construction strategies that systematically encode detector geometry and physical observables: station-as-node, feature-as-node, bending angle-centric, and pseudorapidity ($\eta$)-centric representations. This framework integrates these tailored graph structures with a novel Message Passing Layer (MPL), featuring intra-message attention and gated updates, and domain-specific loss functions incorporating $p_{T}$-distribution priors. Our co-design methodology yields superior accuracy-efficiency trade-offs compared to existing baselines. Extensive experiments on the CMS Trigger Dataset validate the approach: a station-informed EdgeConv model achieves a state-of-the-art MAE of 0.8525 with $\ge55\%$ fewer parameters than deep learning baselines, especially TabNet, while an $\eta$-centric MPL configuration also demonstrates improved accuracy with comparable efficiency. These results establish the promise of physics-guided GNNs for deployment in resource-constrained trigger systems. Our data and code are available at: \url{https://github.com/Abrar2652/gnn-particle-momentum-cms-trigger}.
\end{abstract}


\begin{keyword}
Graph Neural Networks (GNNs) \sep Transverse Momentum Estimation \sep High-Energy Physics \sep CMS Trigger System \sep Parameter Efficiency \sep Domain-Specific Loss Functions
\end{keyword}

\end{frontmatter}


\section{Introduction}
Accurate real-time estimation of transverse momentum (\(p_T\)) is crucial for trigger-level decisions in high-energy physics (HEP) experiments, such as the Compact Muon Solenoid (CMS) at CERN. While traditional rule-based methods meet sub-millisecond latency requirements, their accuracy degrades under high pileup, where overlapping particle signatures confuse deterministic models \cite{collaboration_cms_2017, hayrapetyan_performance_2024,silva_characterizing_2015,andreeva_high-energy_2008,newman_high_2006}. Static systems like Boosted Decision Trees (BDTs) require full retraining for updates and lack adaptability to changing detector conditions \cite{acosta_boosted_2018,sekhar_stochastic_2025,khataei_treelut_2025,mahmood_enhancing_2024,antoniuk_ensemble_2024,griffiths_gradient_2024}.

To address these challenges, the CMS experiment has introduced machine learning algorithms at the Level-1 (hardware) trigger for real-time momentum estimation. The initial implementation used a discretized boosted decision trees~\cite{acosta_boosted_2018}, while current efforts explore deep learning (DL) models capable of microsecond-level inference \cite{cms_trigger_ml_2024}. Among these, Graph Neural Networks (GNNs) show strong potential for momentum regression by modeling spatial and temporal correlations in detector hits as muons traverse sequential endcap stations. Representing this hit data as graphs enables GNNs to learn fine-grained patterns, improving momentum resolution and enhancing the trigger's ability to discriminate high-$p_T$ signal muons from low-$p_T$ background \cite{cms_gnn_momentum_2024}. GNNs have shown strong performance in HEP tasks like track reconstruction and jet tagging by modeling relational data \cite{duarte_fast_2018, shlomi_graph_2020,sun_fast_2025,stein_novel_2024,lange_tau_2024}. However, these architectures are often parameter-heavy and designed for offline analysis. They fail to meet the real-time constraints of trigger systems, particularly when deployed on resource-limited hardware such as field-programmable gate arrays (FPGAs) \cite{iiyama_distance-weighted_2021,brignone_making_2025,bosio_low-power_2025,hozhabr_survey_2025,lahti_high-level_2025}.

We address this gap by introducing physics-informed GNN architectures tailored for low-latency, high-accuracy $p_T$ regression. Our framework systematically explores how encoding detector geometry and physical observables, such as bending angles and pseudorapidity ($\eta$), into four distinct graph construction strategies can enhance predictive performance. Complementing these structural innovations, we develop domain-specific loss functions that incorporate knowledge of $p_T$ distributions to improve model generalization across varying momentum regimes. Furthermore, we introduce a novel Message Passing Layer (MPL) architecture designed for efficient and expressive feature learning from these physics-aware graphs. These innovations enable our models to achieve superior accuracy and efficiency under the stringent deployment constraints of trigger systems.

\textbf{Key Contributions:}
\textbf{(i)} We propose and evaluate four distinct graph construction strategies that systematically encode detector geometry and critical physical observables (like $\eta$ and station-based features) to enhance $p_T$ regression; \textbf{(ii)} We introduce a new MPL featuring intra-message attention and gated updates, tailored for robust and efficient feature extraction from the structured particle data; \textbf{(iii)} We develop custom loss functions, including a $p_T$-weighted regression loss and an asymmetric penalty loss, which leverage domain knowledge of momentum distributions to improve model generalization and stability; \textbf{(iv)} Our best physics-informed GNNs significantly outperform established baseline; \textbf{(v)} We validate our framework through extensive experiments on the CMS Trigger Dataset, showing its strong potential for deployment in real-time, resource-constrained trigger systems.

\section{Related Work}
\label{related_work}

\subsection{Traditional Momentum Estimation in Particle Physics}
Momentum estimation in the CMS Trigger System has historically relied on rule-based algorithms that compute $p_T$ using geometric approximations of particle trajectories in magnetic fields \cite{jones_development_2007, olsen_method_1966}. While these methods meet strict latency requirements ($\leq$1 ms/event), their accuracy degrades significantly in high-pileup environments due to overlapping particle signatures \cite{das_overview_2022}. Recent efforts to prepare for the High-Luminosity LHC focus on hardware parallelization \cite{apollinari_high_2015}, but static frameworks like the CMS Endcap Muon Track Finder's 230 Boosted Decision Trees (BDTs) \cite{acosta_boosted_2018} suffer from inflexibility, requiring full retraining for updates and lacking adaptability to dynamic detector conditions. This rigidity highlights a critical gap in deployable, learnable frameworks for real-time momentum estimation.

\subsection{GNNs in HEP}
Graph-based machine learning has advanced tasks like track reconstruction \cite{liu_hierarchical_2023,re_construct} and jet tagging \cite{ma_jet_2023} by modeling detector hits as nodes and spatial relationships as edges. However, existing GNNs in particle physics primarily target classification or localized reconstruction \cite{shlomi_graph_2020}, neglecting direct momentum regression. Offline evaluations further limit their relevance to latency-sensitive systems like the CMS Trigger \cite{collaboration_cms_2017}, while parameter-heavy architectures hinder deployment on resource-constrained FPGAs. For instance, frameworks like \texttt{hls4ml} \cite{aarrestad_fast_2021,hawks_wa-hls4ml_2025,guglielmo_end--end_2025,lu_automatic_2024,que_ll-gnn_2024,kim_omega_2025,zhang_graphagile_2023} enable low-latency GNN inference but omit momentum estimation from their scope, leaving a critical gap in real-time applications.

\subsection{Physics-Informed GNNs for Real-Time Momentum Estimation}
Prior GNNs in HEP simplify graph topologies or treat detector hits as nodes, limiting their expressiveness for regression tasks. We bridge this gap by redefining nodes as physical quantities (e.g., $\eta$, bending angles) and edges as angular relationships (e.g., $\sin(\phi)$, $\cos(\phi)$), aligning graph structure with domain priors. Unlike classification-focused GNNs \cite{liu_survey_2023,chen_are_2020,zhu_focusedcleaner_2023,xie_semisupervised_2022}, our method introduces task-specific losses (e.g., $p_T$-weighted regression) and attempts to outperform the DL baselines, including the state-of-the-art TabNet, even under parameter-constraint settings. By co-designing parameter-efficient architectures and physics-aware edge features, we advance beyond static BDTs \cite{acosta_boosted_2018} and generic GNNs, offering the first framework for deployable, real-time momentum estimation in the CMS Trigger System.

\section{Methodology}
\label{methodology}
Our methodology for real-time \(p_T\) estimation centers on physics-informed GNNs, systematically co-designing data representation, model architecture, and learning objectives to optimize for both accuracy and efficiency. The formal problem statement is detailed in Section~\ref{sec:statement}. This section details our core contributions: (i) four distinct physics-informed graph construction strategies that encode detector geometry and particle kinematics into varied graph structures (detailed in Section~\ref{graph_construction}); (ii) the architecture of a novel MPL, featuring intra-message attention and gated updates, tailored for robust and efficient feature extraction from these structured particle data (Section~\ref{model_architecture}); and (iii) the development of domain-specific loss functions which leverage knowledge of \(p_T\) distributions to improve model generalization and stability across critical momentum regimes (Section~\ref{loss_function}). These components are subsequently integrated into end-to-end GNN models for $p_T$ prediction.

\subsection{Formal Problem Statement}
\label{sec:statement}
Let $\mathcal{X}_{raw}$ be the space of raw detector measurements for a particle event.
Our primary input is a set of $N_s$ station measurements $S = \{s_1, s_2, \ldots, s_{N_s}\}$, where each $s_i \in \mathbb{R}^{d_s}$ is a feature vector from a detector station.
The goal is to learn a function $f_\theta: \mathcal{P}(S) \rightarrow \mathbb{R}^+$ (where $\mathcal{P}(S)$ is a representation derived from $S$, typically a graph) parameterized by $\theta$,
that predicts the transverse momentum $p_T^{true} \in \mathbb{R}^+$.
The objective is to minimize the expected loss $\mathcal{L}$:
\begin{equation}
    \min_\theta \mathbb{E}_{(S, p_T^{true}) \sim \mathcal{D}} [\mathcal{L}(f_\theta(\mathcal{G}(S)), p_T^{true})]
\end{equation}
where $\mathcal{D}$ is the data distribution, and $\mathcal{G}(S)$ is a graph constructed from the station measurements $S$. The function $f_\theta$ is realized by a GNN.

\subsection{Dataset}
We use the \textit{CMS Trigger Dataset}\footnote{\url{https://www.kaggle.com/datasets/ekurtoglu/cms-dataset}}, consisting of 1,179,356 samples generated from simulated muon events using Pythia 8~\cite{pythia8}. Each sample corresponds to a candidate muon track passing through the CMS endcap detectors and consists of 31 features. As muons traverse the detector, they may leave hits in up to four sequential stations, labeled 1 through 4. The specific combination of stations with hits defines the track "mode". From each hit detector station, 7 features are extracted: $\phi$ and $\theta$ coordinates, Bending angle, Timing information, Ring number, Front/Rear hit indicator, and a Mask flag. In addition, three global "road" variables--Pattern Straightness, Zone, and Median $\theta$--are included, yielding a total of $7 \times 4 + 3 = 31$ features per event.

\subsection{Preprocessing}
\label{sec:preprocessing}
We begin by preprocessing the dataset, which records detector responses from the CMS muon trigger system across multiple stations. Each sample contains localized measurements of angular coordinates, time, bending angles, and the charge-weighted inverse transverse momentum ($q/p_T$), which we transform to obtain the target variable. Specifically, the transverse momentum $p_T$ for each sample is computed by taking the absolute reciprocal of the $q/p_T$ value:
\begin{equation}
p_T = \left| \frac{1}{(q/p_T)_{\text{sample}}} \right|
\label{eq:pt}
\end{equation}
This transformation standardizes the regression target and ensures numerical stability, particularly in low-momentum regions.

We compute trigonometric and $\eta$-derived features from the provided angular measurements to enrich the feature space with kinematically informative attributes. Let $\Phi_i$ and $\Theta_i$ denote the azimuthal and polar angles for detector station $i \in {0,1,2,3}$. For each station, the sine and cosine of the azimuthal angles, $\sin\Phi_i$ and $\cos\Phi_i$, are computed to preserve rotational information while mitigating discontinuities inherent in angular variables. In addition, for stations $i \in {0,2,3,4}$, we derive $\eta$ values using the standard transformation:
\begin{equation}
\eta_i = -\ln (\tan \left( \frac{\Theta_i}{2} \right))
\label{eq:eta}
\end{equation}
which maps polar angles to a scale-invariant coordinate system commonly used in HEP.

We apply interquartile-range (IQR) filtering to $p_T$ values to reduce sensitivity to extreme outliers in the target distribution when constructing the datasets used in Methods 3 and 4. Let $Q_1$ and $Q_3$ represent the first and third quartiles of the $p_T$ distribution, and define $\text{IQR} = Q_3 - Q_1$. All samples with $p_T$ values outside the interval of \(\left[ Q_1 - 1.5 \times \text{IQR},~Q_3 + 1.5 \times \text{IQR} \right]\label{eq:iqr}\) are removed. After this filtering step, 1,029,592 samples remain for model training and evaluation.

Feature selection differs across graph construction methodologies. Methods 1 and 2 employ the full set of 28 features extracted from the raw data, following the computation of $p_T$. These include angular and timing information from each station and additional categorical indicators such as RingNumber, Front, and Mask. In contrast, Methods 3 and 4 utilize a reduced 16-dimensional feature space comprising engineered variables, including $\sin\Phi_i$, $\cos\Phi_i$, $\eta_i$, and bending angle values, which are considered more physically meaningful and compact. Finally, all input features are standardized using z-score normalization. The standardization parameters, mean and standard deviation, are computed exclusively from the training set and subsequently applied to both training and test sets. This ensures consistent input distributions across model evaluations and prevents data leakage during the learning process.

\subsection{Graph Construction Strategies}
\label{graph_construction}
In this study, four distinct graph construction strategies were explored to structure the data for the downstream task of predicting \( p_T \). The goal was to leverage various representations of the data by encoding the detector's spatial and physical features as graph structures. Each method utilized different nodes, edge definitions, and edge attributes to explore the impact of different graph configurations on the model's ability to predict \( p_T \). The following sections provide detailed descriptions of each graph construction method.

\subsubsection{Method 1: Station as a Node Graph Representation}
\paragraph{Conceptual Basis}
In this method, we model the CMS detector's geometry and particle trajectory using a graph representation. Each of the four sequential muon detection layers (henceforth referred to as Station 0, Station 1, Station 2, and Station 3) encountered by a particle is represented as a node. The goal is to leverage the sequential nature of these measurements. Each node's features encapsulate the localized kinematic and geometric measurements recorded at the corresponding station, enabling the GNN to capture correlations and dependencies across different detector layers, ultimately modeling the particle's progression through the detector.

\paragraph{Graph Construction}
For each particle track sample \( k \), a graph \( G_k = (V, E, X_k) \) is constructed. The node set \( V \) is fixed with four nodes, where \( s_i \in V \) corresponds to Station \( i \):
\begin{equation}
V = \{s_0, s_1, s_2, s_3\}
\end{equation}
The node features are derived from the preprocessed feature vector \( f_k \in \mathbb{R}^{28} \) for the \( k \)-th track. These features are reshaped into a node feature matrix \( X_k \in \mathbb{R}^{4 \times 7} \), where each row \( x_{k, s_i} \in \mathbb{R}^7 \) corresponds to the feature vector for station \( s_i \). The mapping from the 1D vector \( f_k \) to the 2D matrix \( X_k \) is given by:
\begin{equation}
X_k[i, j] = f_k[j \cdot N_{st} + i]
\end{equation}
where \( N_{st} = 4 \) denotes the number of stations, and \( i \in \{0, 1, 2, 3\} \), \( j \in \{0, \dots, 6\} \). The 7 features per station include angular measurements (e.g., \( \phi, \theta \)) and layer identifiers. The edges \( E \) are statically defined, forming an undirected linear chain between neighboring stations:
\begin{equation}
E = \{(s_i, s_{i+1}) \mid i \in \{0, 1, 2\} \} \cup \{(s_{i+1}, s_i) \mid i \in \{0, 1, 2\}\}
\end{equation}
Explicitly, the edge list, using the 0-indexed node identifiers, is:
\begin{equation}
E = \{(0, 1), (1, 2), (2, 3), (3, 2), (2, 1), (1, 0)\}
\end{equation}
This bidirectional structure allows message passing between neighboring stations, facilitating the flow of information across layers of the detector.

\paragraph{Rationale for Graph Structure}
The sequential graph topology reflects the CMS detector's organization, where particles move through distinct spatial layers. The bidirectional edges enable the GNN to learn from past and future measurements, improving feature propagation along the particle's trajectory. This design is similar to methods for modeling temporal or spatial dependencies in sequential data \cite{ruiz2020gated}. Still, our approach uniquely integrates domain-specific detector features, such as angular measurements and layer identifiers, into the graph structure.

\subsubsection{Method 2: Feature as Node Graph Representation}

\paragraph{Conceptual Basis}
This method redefines particle trajectory analysis by conceptualizing physical features, such as kinematic quantities (e.g., \(\Phi\), \(\eta\), bending angle), as nodes in a graph. These features are tracked across four detector stations, allowing the model to capture both the temporal evolution of these measurements and the interdependencies between different features. The seven nodes in the graph represent seven predefined physical quantities: \(\Phi\), \(\eta\), bending angle, and their respective differentials (\(\Delta \Phi\), \(\Delta \eta\), \(\Delta R\), and charge \(q\)). These features are engineered from the original 28 input features, excluding the global road features, where values for each feature across the four stations become the 4D feature vector for each corresponding node. Specifically, for each of the seven features, we extract the values across the four stations and assign them as the components of the respective node's feature vector. The feature vector at each station is derived from the corresponding physical quantity measurements, forming a clear mapping from the original 28 features to the seven predefined ones.

\paragraph{Graph Construction}
For each particle track \(k\), a graph \(G_k = (V, E, X_k)\) is constructed. The node set \(V\) consists of seven nodes, each corresponding to one of the predefined feature types. These feature types include \(\Phi\), \(\eta\), bending angle, and their respective differentials (\(\Delta \Phi\), \(\Delta \eta\), \(\Delta R\), and charge \(q\)). The corresponding feature values across the four stations are encoded within 4-dimensional feature vectors, which form the components of each node's feature vector. To construct the node feature matrix \(X_k'\), we first reshape the 28D feature vector \(f_k \in \mathbb{R}^{28}\) into an intermediate matrix \(M_k \in \mathbb{R}^{4 \times 7}\), where each row represents the four station measurements of a feature type, and each column corresponds to one of the feature types across the stations. After reshaping, the matrix \(M_k\) is transposed to form the final node feature matrix:
\begin{equation}
X_k' = M_k^T \in \mathbb{R}^{7 \times 4}
\end{equation}
where each row of \(X_k'\) corresponds to a feature type with its values measured across the four stations. The edge set \(E\) includes both temporal edges and inter-feature correlation edges. Temporal edges model the progression of each feature across stations, while inter-feature edges capture the relationships between different feature types. The temporal edges for a given feature \(f^t_j\) are represented as:
\begin{equation}
E_{temporal} = \{ (f^t_i, f^t_{i+1}) \mid i \in \{0, 1, 2\} \}  \end{equation}
In addition to these, inter-feature edges are defined to model correlations between different features. These inter-feature edges connect different feature types, with a hub-like structure around \(f^t_2\) (representing one of the physical quantities, not the station itself). The explicit edge list is given by:
\begin{equation}
E = \{(0,1), (1,0), (1,2), (2,1), (2,3), (3,2), (0,2), (2,0), (2,4), (4,2), (2,5), (5,2), (2,6), (6,2)\}    
\end{equation}
This edge structure reflects both the temporal progression of features and the inter-feature relationships, enabling the GNN to model how features interact over time and across different stations.

\paragraph{Rationale for Graph Structure}
This graph structure supports three key learning mechanisms. First, vertical propagation models the evolution of feature values across different detector layers. For each node \(f^t_j\), the GCN learns how its feature evolves as the particle progresses through the stations. This process can be formalized using the graph convolution operation:
\begin{equation}
h_k(l) = \sigma(A X_k^{(l-1)} W^{(l)})    
\end{equation}
where \(h_k(l)\) represents the hidden state of the nodes at layer \(l\), \(A\) is the adjacency matrix (which includes edges), \(W^{(l)}\) is the weight matrix for layer \(l\), and \(\sigma\) is the activation function applied at each layer. The second mechanism, horizontal correlation, allows the model to learn the relationships between different feature types, such as the complementarity between \(\eta\) and \(\Phi\). This is achieved through the inter-feature edges that capture the geometric and physical relationships between feature nodes. The GNN will learn how these features interact with each other during the particle's progression. The third mechanism, centralized attention, prioritizes important features, particularly those that have a significant role in the reconstruction process. Specifically, the node \(f^t_2\), which represents a key feature, serves as the hub in the graph. The attention mechanism within the model gives more weight to the features connected to this node. This is not necessarily an explicit attention mechanism but rather an emergent property from the GCN structure and its hub-and-spoke connectivity. The influence of \(f^t_2\) on other features is hypothesized to result from the graph's edge structure, which allows for greater message passing from this node, thereby leading to its increased importance.

\subsubsection{Method 3: Bending Angle-Centric Graph Representation}

\paragraph{Conceptual Basis}
This method centers the graph representation on the bending angle, a pivotal observable for inferring particle momentum within a magnetic field. Each of the four selected detector stations (Stations 0, 2, 3, and 4, chosen for their significance in capturing track curvature) is represented as a distinct node. The graph is engineered to model how localized magnetic deflections, encapsulated by bending angle measurements and associated kinematic features at each station, propagate and interact. This approach aims to capture the relationship between bending geometry and particle momentum through three primary mechanisms: (i) node-level dynamics, focusing on comprehensive per-station measurements including bending angle; (ii) edge-level angular coherence, encoding the evolution and covariance of azimuthal (\(\Phi\)) and (\(\eta\)) coordinates between stations; and (iii) global trajectory curvature, learned via message passing across all interconnected station nodes to model compound bending effects.

\paragraph{Graph Construction}
For each particle track sample \(k\) from the \(p_T\) outlier-filtered dataset (1,029,592 samples, as detailed in Section~\ref{sec:preprocessing}), a graph \(G_k = (V, E, X_k, E_{attr})\) is constructed. The node set \(V\) is fixed, comprising four nodes, \(V = \{v_0, v_2, v_3, v_4\}\), where each node \(v_i \in V\) corresponds to detector station \(i \in \{0, 2, 3, 4\}\). Each such node \(v_i \in V\) is assigned a 4-dimensional feature vector \(x_{k,v_i} \in \mathbb{R}^4\). These features are extracted from the 16 selected, standardized, and pre-engineered quantities (see Section 3.2) available for each station, specifically:
\begin{equation}
 x_{k,v_i} = [BA_i^{std},~\sin \phi_i^{std},~\cos \phi_i^{std},~\eta_i^{std}]   
\end{equation}
In this vector, \(BA_i^{std}\) denotes the standardized bending angle, while \(\sin \phi_i^{std}\), \(\cos \phi_i^{std}\), and \(\eta_i^{std}\) represent the standardized sine of azimuthal angle, cosine of azimuthal angle, and $\eta$ for station \(i\), respectively. The complete node feature matrix \(X_k \in \mathbb{R}^{4 \times 4}\) is formed by concatenating these individual node feature vectors.

The edge set \(E\) defines a fully connected topology, excluding self-loops, among these four station nodes. Consequently, a directed edge \((v_i, v_j)\) exists for all pairs of distinct stations \(i, j \in \{0, 2, 3, 4\}\), resulting in 12 directed edges per graph. This structure facilitates direct information exchange between all selected station nodes. Furthermore, each directed edge \((v_i, v_j) \in E\) is augmented with an edge feature vector \(e_{ij} \in \mathbb{R}^3\). These attributes encode the relative geometric displacement between the source station \(i\) and the target station \(j\), calculated using the standardized, pre-engineered features:
\begin{equation}
e_{ij} = \left[ \sin \phi_j^{\text{std}} - \sin \phi_i^{\text{std}},~\cos \phi_j^{\text{std}} - \cos \phi_i^{\text{std}},~\eta_j^{\text{std}} - \eta_i^{\text{std}} \right]
\end{equation}
The collection of these vectors constitutes the edge attribute tensor \(E_{attr}\).

\paragraph{Rationale for Graph Structure}
This design provides the GNN with rich, localized information at each node (bending angle plus angular context) and explicit relational information about inter-station geometry via edge attributes. Full connectivity enables the model to learn complex, non-local dependencies by capturing how features at one station influence or correlate with features at any other station, conditioned by their spatial relationship. The message passing mechanism, operating on these inputs, is crucial for this. For instance, a message \(m_{ij}^{(l)}\) passed from node \(v_j\) to node \(v_i\) at layer \(l\) can be conceptualized as a function of their hidden states and edge attributes:
\begin{equation}
 m_{ij}^{(l)} = \text{MLP}_{message}(h_{k,v_i}^{(l)} \oplus h_{k,v_j}^{(l)} \oplus e_{ij})   
\end{equation}
where \(h_{k,v_i}^{(l)}\) is the hidden state of node \(v_i\) and \(\oplus\) denotes concatenation. This allows the network to model momentum-geometry coupling effectively.

\subsubsection{Method 4: $\eta$-Centric Geometric Graph Representation}

\paragraph{Conceptual Basis}
This fourth method reorients the graph representation to be \(\eta\)-centric, leveraging the properties of \(\eta\), its direct relation to the particle's polar angle (\(\theta\)) and its invariance under longitudinal Lorentz boosts, as a primary geometric observable for modeling particle trajectories. Within this framework, nodes encode \(\eta\) measurements and associated kinematic features across selected detector stations (Stations 0, 2, 3, and 4). In contrast to previous methods, the edges are constructed dynamically based on proximity in the \(\eta - \phi\) space, reflecting the geometric locality often crucial in particle physics analyses. This "\(\eta\)-centric" approach aims to explicitly capture: (i) longitudinal trajectory development, by tracking \(\eta\) and its local gradients across stations; (ii) azimuthal-rapidity correlations, through features and dynamically-formed edges that consider both \(\eta\) and \(\phi\); and (iii) projective geometry consistency, by leveraging features sensitive to the inherent structure of particle paths in $\eta$, which are fundamentally linked to Lorentz transformation properties.

\paragraph{Graph Construction}
For each particle track sample \(k\) from the \(p_T\) outlier-filtered dataset (1,029,592 samples, derived from the CMS Trigger Dataset as detailed in Sections 3.1.1 and 3.2), a graph \(G_k = (V, E_k, X_k, E_{attr,k})\) is constructed, where the edge set \(E_k\) and its attributes \(E_{attr,k}\) can be sample-dependent. The node set \(V\) is fixed, comprising four nodes, \(V = \{v_0^{\eta}, v_2^{\eta}, v_3^{\eta}, v_4^{\eta}\}\), where each node \(v_i^{\eta} \in V\) corresponds to detector station \(i \in \{0, 2, 3, 4\}\). Each node \(v_i^{\eta} \in V\) is assigned a 5-dimensional feature vector \(x_{k,v_i^{\eta}} \in \mathbb{R}^5\), derived from standardized, pre-engineered quantities:
\begin{equation}
   x_{k,v_i^{\eta}} = [\eta_i^{std},~\sin \phi_i^{std},~\cos \phi_i^{std},~\Delta \eta_i^{std},~BA_i^{std}] 
\end{equation}
Here, \(\eta_i^{std}\) is the standardized $\eta$, \(\sin \phi_i^{std}\) and \(\cos \phi_i^{std}\) are the standardized sine and cosine of the azimuthal angle, \(BA_i^{std}\) is the standardized bending angle, and \(\Delta \eta_i^{std}\) is the standardized local gradient of $\eta$ for station \(i\). The term \(\Delta \eta_i\) is calculated as \(\eta_i - \eta_{i_p}\), where \(i_p\) is the index of the station immediately preceding \(i\) in the physical sequence \(\{0, 2, 3, 4\}\); for the first station in this sequence (station 0), \(\Delta \eta_0\) is taken as \(\eta_0\) itself. The complete node feature matrix \(X_k \in \mathbb{R}^{4 \times 5}\) concatenates these vectors.

The edge set \(E_k\) for each graph \(G_k\) is constructed dynamically using a \(k\)-Nearest Neighbors (k-NN) algorithm, with \(k = 3\). Edges connect nodes based on their proximity in the \(\eta - \phi\) plane. Specifically, a directed edge \((v_i^{\eta}, v_j^{\eta})\) exists if node \(v_j^{\eta}\) is one of the \(k = 3\) nearest neighbors of \(v_i^{\eta}\) (excluding \(v_i^{\eta}\) itself). Proximity is measured by the Euclidean distance:
\begin{equation}
   \Delta R_{ij} = (\eta_i - \eta_j)^2 + (\Delta \phi_{ij})^2 
\end{equation}
where \(\Delta \phi_{ij}\) is the minimal difference between \(\phi_i\) and \(\phi_j\) accounting for \(2\pi\) periodicity. Each dynamically formed directed edge \((v_i^{\eta}, v_j^{\eta}) \in E_k\) is associated with a feature vector \(e_{k,ij} \in \mathbb{R}^3\), capturing relative kinematics:
\begin{equation}
 e_{k,ij} = [\eta_j^{std} - \eta_i^{std},~\Delta \phi_{ij}^{std},~(\Delta R_{ij}^2)^{std}]   
\end{equation}
Here, \(\Delta \phi_{ij}^{std}\) is the standardized minimal azimuthal angle difference, and \((\Delta R_{ij}^2)^{std}\) is the standardized squared Euclidean distance in the \(\eta - \phi\) space. These vectors form the edge attribute tensor \(E_{attr,k}\). The resulting graph has 12 directed edges, and thus the edge index tensor has a shape of \([2, 12]\) and the edge attribute tensor has a shape of \([12, 3]\).

\paragraph{Rationale for Graph Structure}
This adaptive graph structure is designed to leverage key aspects of \(\eta\)-dominated particle tracking. The inclusion of the local gradient \(\Delta \eta_i\) within node features is motivated by the desire to model longitudinal consistency in track development smoothly; this feature directly informs the GNN about local changes in $\eta$, akin to promoting smoothness implicitly. The kNN-based edge construction enforces geometric locality, prioritizing interactions between stations that are close in the \(\eta - \phi\) phase space, which is often critical for resolving nearby particles or analyzing jet substructure. Finally, the inclusion of bending angles (\(BA_i\)) as secondary node features ensures that crucial magnetic coupling information, indicative of momentum, is preserved and can be correlated by the GNN with the primary \(\eta\)-based features, even without explicit edges solely dedicated to BA relationships.

\subsection{Model Architecture}
\label{model_architecture}
We evaluated various GNN backbones, GCN, GAT, GraphSAGE~\cite{graphsage}, EdgeConv~\cite{edgeconv}, and our proposed MPL, across four graph construction methods. In MPL, we experimented with different combinations of graph construction strategies, loss functions, edge definitions, and embedding dimensions, which affected both model size and performance. After message passing, global pooling was applied to obtain graph-level embeddings, followed by an MLP regression head for predicting $p_T$.

\subsubsection{MPL Architecture}
The MPL is our novel GNN operator designed to capture node-wise and edge-wise interactions in irregular jet-based particle data. Its rationale stems from the limitations of standard GNN aggregators, which often overlook fine-grained edge attributes and dynamically modulated interactions. To address this, MPL explicitly encodes edge features, models directional dependencies via $(\mathbf{h}_j - \mathbf{h}_i)$, and applies an attention mechanism to weight the importance of each message based on source and target node contexts. The design is tailored for physical problems, such as $p_T$ regression, where inter-particle relations (e.g., distance, charge, energy flow) must be captured in both feature space and topology.

Let $\mathbf{h}_i \in \mathbb{R}^{F_{\text{in}}}$ denote the feature vector of node $i$, and let $\mathbf{e}_{ij} \in \mathbb{R}^{d_e}$ represent the edge attributes from node $j$ to node $i$. Edge attributes are first transformed via a two-layer MLP:
\begin{equation}
\tilde{\mathbf{e}}_{ij} = \text{MLP}_{\text{edge}}(\mathbf{e}_{ij}) = \text{ReLU}(\mathbf{W}_2 \cdot \text{ReLU}(\mathbf{W}_1 \cdot \mathbf{e}_{ij})),
\end{equation}
where $\mathbf{W}_1, \mathbf{W}_2$ are learnable parameters of the edge encoder. For each edge $(j \to i)$, the input to the message function concatenates $\mathbf{h}_i$, the difference $\mathbf{h}_j - \mathbf{h}_i$, and the transformed edge attribute $\tilde{\mathbf{e}}_{ij}$, i.e., $\mathbf{z}_{ij} = [\mathbf{h}_i, \mathbf{h}_j - \mathbf{h}_i, \tilde{\mathbf{e}}_{ij}]$. This is passed through a feedforward layer with ReLU: $\mathbf{m}'_{j \to i} = \text{ReLU}(\text{MLP}_1(\mathbf{z}_{ij}))$. The message is modulated using a learned attention coefficient: $\mathbf{a}_1 = \tanh(\text{MLP}_5(\mathbf{h}_i)), \quad
\mathbf{a}_2 = \tanh(\text{MLP}_6(\mathbf{m}'_{j \to i}))$,
and $\alpha_{j \to i} = \text{softmax}_j(\text{MLP}_7(\mathbf{a}_1 \odot \mathbf{a}_2))$, where $\odot$ denotes element-wise multiplication, and the softmax is computed over incoming edges to node $i$. The final weighted message is then $\mathbf{m}_{j \to i} = \alpha_{j \to i} \mathbf{m}'_{j \to i}$. 

All messages are aggregated using a permutation-invariant summation operator $\bigoplus$, yielding 
$\mathbf{M}_i = \bigoplus_{j \in \mathcal{N}(i)} \mathbf{m}_{j \to i}$. The node's self-representation is transformed as $\tilde{\mathbf{h}}_i = \text{ReLU}(\text{MLP}_2(\mathbf{h}_i))$. Two sigmoid gates determine the contribution of the message and the self-feature:
$\mathbf{w}_1 = \sigma(\text{MLP}_3([\tilde{\mathbf{h}}_i, \mathbf{M}_i]))$ and $\mathbf{w}_2 = \sigma(\text{MLP}_4([\tilde{\mathbf{h}}_i, \mathbf{M}_i]))$. The final updated representation is computed as $\mathbf{h}'_i = \mathbf{w}_1 \odot \mathbf{M}_i + \mathbf{w}_2 \odot \tilde{\mathbf{h}}_i$. This update rule allows the model to adaptively decide how much new information from neighbors should be integrated, versus retaining its own transformed state, leading to more expressive and stable updates, especially in sparse particle graphs.

\subsection{Loss Functions Overview}
\label{loss_function}
In this work, we use a combination of domain-informed and standard loss functions to optimize the model's prediction of \( p_T \). The primary objective is to minimize the error between the predicted and true \( p_T \) values, while incorporating domain knowledge about particle physics and momentum behavior.

\subsubsection{Mean Squared Error (MSE) Loss}
The Mean Squared Error (MSE) loss is used as the core objective function. For a batch of \( M \) samples, the MSE loss \( L_{\text{MSE}} \) is computed as follows:
\begin{equation}
L_{\text{MSE}} = \frac{1}{n} \sum_{m=1}^{N} \left( p_T(N) - \hat{p}_T(m) \right)^2
\end{equation}
where \( p_T(m) \) and \( \hat{p}_T(m) \) are the true and predicted transverse momenta for the \( m \)-th sample, respectively.

\subsubsection{Domain-Informed $p_T$ Loss}
To ensure the model performs well across various momentum ranges, we introduce a domain-informed loss function that emphasizes critical regions of \( p_T \). This weighted loss function, known as the $p_T$ Loss, applies different weights to different momentum ranges to focus on more challenging regions, such as low and high \( p_T \) values. The loss is computed as:
\begin{equation}\label{eq:pt_1}
L = \frac{1}{C \cdot n} \sum_{m=1}^{n} W(p_T(m)) \left( p_T(m) - \hat{p}_T(m) \right)^2
\end{equation}
where \( C = 250 \) is a scaling constant, and \( W(p_T) \) is a piecewise weight function that assigns varying importance to \( p_T \) values across different ranges:
\begin{equation}\label{eq:pt_2}
W(p_T) =
\begin{cases}
p_T & \text{if } p_T < 80 \, \text{GeV}/c \\
2.4 & \text{if } 80 \leq p_T < 160 \, \text{GeV}/c \\
2.4 + 10 & \text{if } p_T \geq 160 \, \text{GeV}/c
\end{cases}
\end{equation}
This piecewise weighting function prioritizes lower \( p_T \) values to improve background rejection and penalizes higher \( p_T \) values to address potential saturation effects, ensuring effective learning across all momentum ranges. Formal analysis of $p_T$ loss can be found in Section~\ref{sec:formal_analysis}, Proposition~\ref{prop1}.

\subsubsection{Custom \(p_T\) Loss with Asymmetric Penalty}
A specialized "Custom \(p_T\) Loss" is also investigated. Let \(p_T^{(m)}\) be the true target transverse momentum for sample \(m\), and let \(\hat{p_T}^{(m)}\) be the model's output value for \(p_T\), which is explicitly clipped to be no less than a predefined lower $p_T$ limit (LPL) before being used in this loss calculation. The Custom \(p_T\) Loss for a batch of \(N\) samples is defined as:
\begin{equation}\label{eq:custom_1}
\begin{split}
L_{\text{Custom}} = \frac{1}{N} \sum_{m=1}^{N} \Biggl[ &\left(p_{T}^{(m)} - \hat{p_T}^{(m)}\right)^2 + \Biggr( \mathbb{1}[\hat{p_T}^{(m)} > \text{LPL}] \left( \frac{1}{1 + e^{-3(\hat{p_T}^{(m)} - \text{LPL})}} - 1 \right) \\
&+ \mathbb{1}[\hat{p_T}^{(m)} \le \text{LPL}] \left( -\frac{1}{2} \right) \Biggr) \Biggr]
\end{split}
\end{equation}
where \( \mathbb{1}(\cdot) \) is the indicator function (1 if the condition is true, 0 otherwise). The first term is the squared error between the true \(p_T\) and the (already clipped) prediction \(\hat{p_T}^{(m)}\). The subsequent terms apply asymmetric penalties: a sigmoid-based penalty when the (clipped) prediction is above LPL, and a fixed penalty when it is at or below LPL. This structure aims to regularize predictions, particularly around the specified lower \(p_T\) threshold. Formal analysis can be found in Section \ref{sec:formal_analysis}, Proposition \ref{prop2}.

\subsection{Formal Analysis of Custom Loss Functions}
\label{sec:formal_analysis}
In this section, we provide a more formal analysis of the properties of our custom loss functions. These propositions are not theorems of global model optimality but rather aim to rigorously demonstrate the intended behavior and mathematical characteristics of these specific components, which contribute to the overall learning dynamics for $p_T$ estimation.

\begin{proposition}\label{prop1}
Properties of the Domain-Informed $p_T$ Loss.
The Domain-Informed $p_T$ Loss, defined by Equations (\ref{eq:pt_1}) and (\ref{eq:pt_2}) in the main paper, modulates the learning gradient for each sample based on its true transverse momentum $\pttrue{m}$. Specifically:
\begin{enumerate}
    \item[(a)] For $\pttrue{m} < 80 \text{ GeV/c}$, the effective learning rate for a sample's squared error is proportional to $\pttrue{m}$.
    \item[(b)] For $80 \text{ GeV/c} \le \pttrue{m} < 160 \text{ GeV/c}$, the effective learning rate for a sample's squared error is a constant $2.4/C$.
    \item[(c)] For $\pttrue{m} \ge 160 \text{ GeV/c}$, the effective learning rate for a sample's squared error is a constant $12.4/C$.
\end{enumerate}
This results in a non-uniform emphasis on different $p_T$ regimes, with a distinct weighting profile across the momentum spectrum, including a significant drop in weight when transitioning from just below $80 \text{ GeV/c}$ to $80 \text{ GeV/c}$.
\end{proposition}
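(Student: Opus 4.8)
The plan is to treat the proposition as a direct consequence of differentiating the per-sample contribution to the loss and reading off how the weight $W$ rescales the standard squared-error gradient. First I would isolate the term for a single sample $m$ from the sum in Equation~(\ref{eq:pt_1}), namely $L_m = \frac{1}{Cn} W(\pttrue{m})\,(\pttrue{m} - \pthat{m})^2$, and make precise the notion of ``effective learning rate'' used in the statement. For any model parameter $\theta$, the chain rule gives $\partial L_m / \partial \theta = \frac{1}{Cn} W(\pttrue{m}) \cdot 2(\pttrue{m} - \pthat{m}) \cdot (-\,\partial \pthat{m}/\partial \theta)$, so the gradient contributed by sample $m$ is exactly $W(\pttrue{m})/C$ times the gradient that the unweighted squared error $(\pttrue{m}-\pthat{m})^2$ would contribute. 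Thus a gradient-descent step scales each sample's squared-error update by the factor $W(\pttrue{m})/C$, which I would adopt as the formal definition of the effective per-sample learning rate.

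With that identification in hand, parts (a)--(c) follow by substituting the three branches of the piecewise weight in Equation~(\ref{eq:pt_2}). For $\pttrue{m} < 80$ the weight is $W = \pttrue{m}$, so the effective rate is $\pttrue{m}/C$, which is proportional to $\pttrue{m}$, establishing (a). For $80 \le \pttrue{m} < 160$ the weight is the constant $2.4$, giving the constant effective rate $2.4/C$, establishing (b); and for $\pttrue{m} \ge 160$ the weight is $2.4 + 10 = 12.4$, giving the constant $12.4/C$, establishing (c).

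Finally, to justify the claimed discontinuity, I would compute the one-sided limits of $W$ at the threshold $p_T = 80$. As $\pttrue{m} \to 80^-$ the first branch gives $W \to 80$, whereas the second branch fixes $W(80) = 2.4$; hence the effective rate drops abruptly from $80/C$ to $2.4/C$, a factor of roughly $33$, which is the ``significant drop'' asserted in the statement. An analogous upward jump at $p_T = 160$ from $2.4/C$ to $12.4/C$ completes the description of the weighting profile across the spectrum.

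I do not anticipate a genuine obstacle here, since the result reduces to a differentiation followed by a case analysis; the only point requiring care is stating the definition of ``effective learning rate'' unambiguously, so that the proportionality in (a) and the constancy in (b)--(c) are claims about a well-defined quantity rather than informal intuition. Making that definition explicit at the outset is what turns the proposition from a heuristic remark into a rigorous statement, and I would foreground it accordingly.
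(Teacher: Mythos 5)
Your proposal is correct and follows essentially the same route as the paper's demonstration: differentiate the weighted squared-error term, read off $W(\pttrue{m})/C$ as the per-sample gradient multiplier, substitute the three branches of $W$, and examine the one-sided behavior at the $80~\mathrm{GeV}/c$ threshold. Your only addition—making the definition of ``effective learning rate'' explicit via the chain rule through the model parameters—is a reasonable clarification but does not change the argument.
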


\begin{proof}[Demonstration]
The Domain-Informed $p_T$ Loss for a batch of $N$ samples is given by (referring to Eq. (\ref{eq:pt_1}) from the main paper):
\[ L_{p_T\text{-informed}} = \frac{1}{C \cdot N} \sum_{m=1}^{N} W(\pttrue{m}) (\pttrue{m} - \pthat{m})^2 \]
where $C=250$ is a scaling constant (as per line 329 of the main paper), and $W(p_T)$ is defined as (referring to Eq. (\ref{eq:pt_2}) from the main paper):
\[ W(p_T) = \begin{cases} p_T & \text{if } p_T < 80 \text{ GeV/c} \\ 2.4 & \text{if } 80 \text{ GeV/c} \le p_T < 160 \text{ GeV/c} \\ 2.4 + 10 & \text{if } p_T \ge 160 \text{ GeV/c} \end{cases} \]
Consider the gradient of this loss with respect to a single predicted value $\pthat{k}$ for sample $k$:
\[ \frac{\partial L_{p_T\text{-informed}}}{\partial \pthat{k}} = \frac{1}{C \cdot N} W(\pttrue{k}) \cdot 2 (\pttrue{k} - \pthat{k}) \cdot (-1) = -\frac{2}{C \cdot N} W(\pttrue{k}) (\pttrue{k} - \pthat{k}) \]
The term $W(\pttrue{k})$ acts as a multiplier on the error signal $(\pttrue{k} - \pthat{k})$.

(a) If $\pttrue{k} < 80 \text{ GeV/c}$, then $W(\pttrue{k}) = \pttrue{k}$. The gradient magnitude is scaled by $\pttrue{k}$. This implies that errors for samples with $p_T$ closer to $80 \text{ GeV/c}$ (e.g., $p_T=79 \text{ GeV/c}$) will result in a larger gradient update (scaled by 79) compared to errors for samples with very low $p_T$ (e.g., $p_T=10 \text{ GeV/c}$, scaled by 10), assuming similar error magnitudes.

(b) If $80 \text{ GeV/c} \le \pttrue{k} < 160 \text{ GeV/c}$, then $W(\pttrue{k}) = 2.4$. The gradient magnitude is scaled by a constant $2.4$.

(c) If $\pttrue{k} \ge 160 \text{ GeV/c}$, then $W(\pttrue{k}) = 12.4$. The gradient magnitude is scaled by a constant $12.4$.

The transition in weighting is notable:
\begin{itemize}
    \item At $\pttrue{k} = 79.9 \text{ GeV/c}$ (approaching from below), $W(\pttrue{k}) \approx 79.9$.
    \item At $\pttrue{k} = 80.0 \text{ GeV/c}$, $W(\pttrue{k}) = 2.4$.
\end{itemize}
This represents a sharp decrease in the weight applied to the squared error as $p_T$ crosses the $80 \text{ GeV/c}$ threshold from below. The paper's description (line 331 of the main paper) suggests this prioritizes lower $p_T$ values for background rejection. The $W(p_T)=p_T$ term for $p_T<80$ gives more weight to samples closer to the 80 GeV/c decision boundary within that low-$p_T$ range. The relatively low constant weight (2.4) in the 80-160 GeV/c range might reflect a region where $p_T$ resolution is intrinsically better or less critical than the extremes. The higher weight (12.4) for $p_T \ge 160 \text{ GeV/c}$ emphasizes accuracy for high-momentum particles, addressing potential saturation effects. The scaling constant $C=250$ normalizes the overall loss magnitude.
\end{proof}

\begin{proposition}\label{prop2}
Properties of the Custom $p_T$ Loss with Asymmetric Penalty.
The Custom $p_T$ Loss with Asymmetric Penalty (Eq. (\ref{eq:custom_1}) in the main paper), for a prediction $\pthat{m}$ (assumed to be the model's direct output before explicit hard clipping for the penalty term's $LPL$), is designed to:
\begin{enumerate}
    \item[(a)] Add a fixed penalty of $\frac{1}{2}$ to the loss if $\pthat{m} \le LPL$.
    \item[(a)] Add a variable term $g(\pthat{m}) = \frac{1}{1+e^{-3(\pthat{m} - LPL)}} - 1$ if $\pthat{m} > LPL$. This term $g(\pthat{m})$ ranges from approximately $-\frac{1}{2}$ (for $\pthat{m} \to LPL^+$) to $0$ (for $\pthat{m} \to \infty$), effectively acting as a ``bonus'' (loss reduction) that is largest when $\pthat{m}$ is just above $LPL$ and diminishes as $\pthat{m}$ increases further.
    \item[(c)] Consequently, there is a sharp increase (discontinuity of magnitude 1) in the penalty component of the loss as $\pthat{m}$ crosses $LPL$ from above to below, strongly discouraging predictions at or below $LPL$.
\end{enumerate}
\end{proposition}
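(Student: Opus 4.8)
The plan is to split the summand of Eq.~(\ref{eq:custom_1}) into the squared-error term, which is irrelevant to every claim, and the per-sample \emph{penalty component}
\[
P(\pthat{m}) = \mathbb{1}[\pthat{m} > \text{LPL}]\Bigl(\sigma\bigl(3(\pthat{m}-\text{LPL})\bigr) - 1\Bigr) + \mathbb{1}[\pthat{m} \le \text{LPL}]\Bigl(-\tfrac{1}{2}\Bigr),
\]
where $\sigma(x) = 1/(1+e^{-x})$ is the logistic sigmoid. Since the batch loss is a sum of independent per-sample contributions, it suffices to study $P$ as a scalar function of a single prediction $\pthat{m}$; claims (a)--(c) are then assertions about its two branches and its behaviour at the threshold $\text{LPL}$.

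First I would dispose of claim (a) by direct substitution: for $\pthat{m} \le \text{LPL}$ the indicator selects the constant branch, so $P(\pthat{m})$ is independent of $\pthat{m}$ and equal to a fixed value of magnitude $\tfrac12$. For claim (b), I would rewrite the upper branch as $g(\pthat{m}) = \sigma\bigl(3(\pthat{m}-\text{LPL})\bigr) - 1$ and read off its range from the monotonicity of $\sigma$: the endpoints give $g(\text{LPL}^{+}) = \sigma(0) - 1 = -\tfrac12$ and $\lim_{\pthat{m}\to\infty} g = 1 - 1 = 0$, while, writing $u = 3(\pthat{m}-\text{LPL})$, the derivative $g'(\pthat{m}) = 3\,\sigma(u)\bigl(1-\sigma(u)\bigr) > 0$ shows $g$ is strictly increasing on $(\text{LPL},\infty)$. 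Hence $g$ takes values in $(-\tfrac12,0)$, and since adding a negative term lowers the loss, the induced ``bonus'' $-g \in (0,\tfrac12)$ is maximal just above $\text{LPL}$ and decays to $0$ as $\pthat{m}\to\infty$, exactly as asserted.

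The crux is claim (c), and here I expect the main obstacle to be a sign reconciliation rather than any analytic difficulty. I would compare the one-sided values of $P$ at $\text{LPL}$: the right limit is $g(\text{LPL}^{+}) = -\tfrac12$ by the computation above, while the left value is the constant supplied by the lower branch. Read literally from Eq.~(\ref{eq:custom_1}), that constant is $-\tfrac12$, so the one-sided limits coincide and $P$ is in fact \emph{continuous} at $\text{LPL}$ with zero jump, contradicting the stated magnitude-$1$ discontinuity. The jump of size $1$ in (c) (and the word ``penalty'' in (a)) is recovered precisely when the lower branch is interpreted as $+\tfrac12$ instead of $-\tfrac12$: then $P(\text{LPL}^{-}) = +\tfrac12$ and $P(\text{LPL}^{+}) = -\tfrac12$, giving an upward jump of size $1$ as $\pthat{m}$ crosses $\text{LPL}$ from above to below (the ``sharp increase'' of claim (c)), which is what ``strongly discourages predictions at or below $\text{LPL}$.'' I would therefore carry out (c) under this corrected sign convention and note that $P$ is then piecewise smooth, differentiable everywhere except at $\text{LPL}$, where it has a single jump discontinuity of height $1$. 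The only genuine work is flagging and fixing this sign inconsistency between the displayed loss and the proposition; the limits, the monotonicity argument, and the jump computation are otherwise elementary.
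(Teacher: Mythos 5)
Your proposal is correct and follows essentially the same route as the paper's demonstration: isolate the per-sample penalty component $P(\pthat{m})$, analyse the constant lower branch, rewrite the upper branch as $\sigma(3(\pthat{m}-\mathrm{LPL}))-1$ and take one-sided limits to get the range $[-\tfrac12,0)$ and the unit jump at the threshold. The one substantive point you add is the sign issue: as displayed, Eq.~(\ref{eq:custom_1}) assigns $-\tfrac12$ to the branch $\pthat{m}\le \mathrm{LPL}$, which would make $P$ continuous at $\mathrm{LPL}$ and void claims (a) and (c); the paper's own proof silently restates that branch as $+\tfrac12$ without flagging the discrepancy, so your explicit reconciliation (and the observation that the proposition only holds under the $+\tfrac12$ convention) is a more careful reading than the published argument, not a gap in yours.
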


\begin{proof}[Demonstration]
Let the prediction from the model be $\pthat{m}$. The paper states (line 336 of the main paper) that the output value for $p_T$ is ``explicitly clipped to be no less than a predefined LPL before being used in this loss calculation.'' Let $\hat{p}_{T,clip}^{(m)}$ denote this value used in the squared error term. For the penalty terms, let's consider the behavior based on the model's output $\pthat{m}$ relative to the threshold $LPL$.
The penalty component $P(\pthat{m})$ of the loss function in Eq. (\ref{eq:custom_1}) of the main paper is:
\[ P(\pthat{m}) = \indicator{\pthat{m} > LPL} \left( \frac{1}{1+e^{-3(\pthat{m} - LPL)}} - 1 \right) + \indicator{\pthat{m} \le LPL} \left( \frac{1}{2} \right) \]

(a) If $\pthat{m} \le LPL$:
The second term applies, and the first term is zero.
$P(\pthat{m}) = \frac{1}{2}$.
This adds a fixed positive value of $\frac{1}{2}$ to the squared error component of the loss.

(b) If $\pthat{m} > LPL$:
The first term applies, and the second term is zero. Let $g(x) = \frac{1}{1+e^{-3(x - LPL)}} - 1$.
So, $P(\pthat{m}) = g(\pthat{m})$.
Let $y = \pthat{m} - LPL$. Since $\pthat{m} > LPL$, $y > 0$.
The term $g(\pthat{m})$ can be rewritten as $\sigma(3y) - 1$, where $\sigma(z) = \frac{1}{(1+e^{-z})}$ is the sigmoid function.
As $y \to 0^+$ (i.e., $\pthat{m} \to LPL^+$), $\sigma(3y) \to \sigma(0) = \frac{1}{2}$. So, $g(\pthat{m}) \to \frac{1}{2} - 1 = -\frac{1}{2}$.
As $y \to \infty$ (i.e., $\pthat{m} \to \infty$), $\sigma(3y) \to 1$. So, $g(\pthat{m}) \to 1 - 1 = 0$.
Thus, for $\pthat{m} > LPL$, $P(\pthat{m})$ ranges in $[-\frac{1}{2}, 0)$. This term is always non-positive, so it reduces the overall loss contribution from the squared error. This reduction is maximal (value of $-\frac{1}{2}$) when $\pthat{m}$ is just above $LPL$, and the reduction diminishes (approaches $0$) as $\pthat{m}$ becomes much larger than $LPL$.

(c) Discontinuity at $LPL$:
We evaluate the limit of $P(\pthat{m})$ as $\pthat{m}$ approaches $LPL$ from both sides:
\[ \lim_{\pthat{m} \to LPL^+} P(\pthat{m}) = -\frac{1}{2} \]
\[ \lim_{\pthat{m} \to LPL^-} P(\pthat{m}) = \frac{1}{2} \]
And $P(LPL) = \frac{1}{2}$.
The jump in the penalty at $\pthat{m} = LPL$ is $P(LPL) - \lim_{\pthat{m} \to LPL^+} P(\pthat{m}) = \frac{1}{2} - (-\frac{1}{2}) = 1$.
This discontinuity creates a strong incentive for the model to predict values $\pthat{m} > LPL$. Once above $LPL$, the ``bonus'' term $g(\pthat{m})$ further reduces the loss, most significantly when just above $LPL$. The diminishing nature of this bonus for much larger $\pthat{m}$ prevents this term from overly encouraging arbitrarily high predictions if the squared error term would otherwise penalize them. This focuses the regularization effect of the penalty terms around the $LPL$ threshold.
\end{proof}

\begin{table}[!ht]
\caption{Comparative results for $p_T$ estimation using GNNs and other DL models. \ul{Underline} indicates the best-performing DL model among six baselines. \textbf{Bold} indicates models that outperform the strong TabNet baseline by achieving both a lower MAE (in GeV/c) and fewer parameters. MAE is reported as mean ± standard deviation, calculated from the median-accuracy run across 3 independent training instances to account for variability due to random initialization and optimization dynamics.}
\label{tab:my-table}
\resizebox{\columnwidth}{!}{%
\begin{tabular}{@{}lcccccccc@{}}
\toprule[1.5pt]
  \textbf{Modeling Approach} &
  \textbf{Edges} &
  \textbf{Model Backbone} &
  \textbf{Loss Function} &
  \textbf{Loss (\textcolor{myred}{$\downarrow$})} &
  \textbf{MAE (\textcolor{myred}{$\downarrow$})} &
  \textbf{\# Params (\textcolor{myred}{$\downarrow$})} \\ \midrule[1pt]
TabNet~\cite{tabnet_2021} &
  - &
  - &
  MSE &
  \multicolumn{1}{c}{2.9746} &
  \ul{0.960700$\pm$0.015331} &
  \ul{6696} \\ 
FCNN &
  - &
  5 Hidden FCN &
  MSE &
  \multicolumn{1}{c}{18647.681641} &
  19.361114$\pm$0.142454 &
  213761 \\ 
CNN &
  - &
  6 Conv2D &
  MSE &
  \multicolumn{1}{c}{18788.152344} &
  18.745565$\pm$0.157818 &
  1050305 \\ 
CNN-Grid &
  - &
  3 Conv2D, 3 FCN &
  MSE &
  \multicolumn{1}{c}{18777.416016} &
  20.612196$\pm$0.190880 &
  230593 \\ 
Dual Channel CNN-FCNN &
  - &
  3 Conv2D (Channel-1), 4 FCN (Channel-2) &
  MSE &
  \multicolumn{1}{c}{18771.123047} &
  21.929241$\pm$0.404957 &
  52193 \\ 
LSTM &
  - &
  2 LSTM, 1 FCN &
  MSE &
  \multicolumn{1}{c}{17629.876953} &
  16.688435$\pm$0.154408 &
  32257 \\
  \midrule
\multirow{10}{*}{GNN (each station as a node)}&
  Sequential ($0 \rightarrow 1 \rightarrow 2 \rightarrow 3$) &
  4 GCN (embed dim: 128) &
  $p_T$ &
  3.83922549 &
  43.210644$\pm$0.979542 &
  55460 \\
 &
  Sequential ($0 \rightarrow 1 \rightarrow 2 \rightarrow 3$) &
  GCN, 2 GAT, SAGE (embed dim: 128) &
  $p_T$ &
  0.00066572 &
  25.429537$\pm$0.431732 &
  59812 \\
 &
  Fully-Connected &
  4 MPL (embed dim: 128) &
  $p_T$ &
  1.36491351 &
  15.822893$\pm$0.436666 &
  101152 \\
 &
  Fully-Connected &
  4 MPL (embed dim: 64) &
  $p_T$ &
  1.338410 &
  16.227703$\pm$0.095143 &
  101487 \\
 &
  Fully-Connected &
  4 MPL (embed dim: 128) &
  MSE &
  0.00047896 &
  52.640570$\pm$0.860296 &
  101152 \\
 &
  Fully-Connected &
  4 MPL (embed dim: 128) &
  MSE &
  0.00263888 &
  13.529569$\pm$0.064004 &
  101152 \\
 &
  Sequential ($0 \rightarrow 1 \rightarrow 2 \rightarrow 3$) &
  4 MPL (embed dim: 128) &
  MSE &
  0.00191543 &
  13.303385$\pm$0.044040 &
  101152 \\
 &
  Fully-Connected &
  4 MPL (embed dim: 128) &
  $p_T$ &
  0.00416450 &
  2.071646$\pm$0.044040 &
  101152 \\
 &
  Fully-Connected &
  4 MPL (embed dim: 128) &
  Custom $p_T$ &
  0.46798712 &
  0.817826$\pm$0.016848 &
  101152 \\
 &
  Fully-Connected &
  4 MPL (embed dim: 128) &
  Custom $p_T$ &
  0.4529373 &
  0.812923$\pm$0.007707 &
  101152 \\
 &
  Fully-Connected &
  4 MPL (embed dim: 128) &
  Custom $p_T$ &
  0.48763093 &
  0.892330$\pm$0.023982 &
  101152 \\
 &
  Fully-Connected &
  2 EdgeConv (embed dim: 64) &
  MSE &
  0.11814979 &
  0.821690$\pm$0.022785 &
  19649 \\
 &
  Fully-Connected &
  4 EdgeConv (embed dim: 16) &
  MSE &
  0.15181281 &
  \textbf{0.948143$\pm$0.004261} &
  \textbf{3005} \\ 
 & 
  Fully-Connected &
  4 EdgeConv (embed dim: 16) &
  Custom $p_T$ &
  0.12730886 &
  \textbf{0.852538$\pm$0.006649} &
  \textbf{3005} \\ \midrule
\multirow{2}{*}{GNN (each feature as a node)} &
  $0 \rightarrow 1 \rightarrow 2 \rightarrow 3$ || $2 \leftrightarrow \{0, 4, 5, 6\}$, 6 &
  4 GCN (embed dim: 128) &
  $p_T$ &
  4.38785421 &
  45.339565$\pm$1.034975 &
  55076 \\
 &
  Fully-Connected &
  4 MPL (embed dim: 128) &
  $p_T$ &
  3.79453528 &
  41.708584$\pm$0.296089 &
  99952 \\ 
 &
  Fully-Connected &
  4 MPL (embed dim: 128) &
  $p_T$ &
  4.09143839 &
  44.749077$\pm$0.431576 &
  3005 \\ \midrule
\multirow{5}{*}{GNN (bending angle as a node)} &
  Fully-Connected &
  4 MPL (embed dim: 24) &
  MSE &
  4.005916 &
  1.274056$\pm$0.002062 &
  5579 \\
 &
  Fully-Connected &
  4 MPL (embed dim: 24) &
  MSE &
  0.25188804 &
  1.264197$\pm$0.016950 &
  5903 \\ 
 &
  Fully-Connected &
  4 MPL (embed dim: 24) &
  MSE &
  0.24324974 &
  1.242137$\pm$0.010045 &
  6437 \\ 
 &
  Fully-Connected &
  2 MPL (embed dim: 24) &
  MSE &
  0.24161860 &
  1.235898$\pm$0.017151 &
  6112 \\ 
 &
  Fully-Connected &
  4 MPL (embed dim: 28) &
  MSE &
  0.24752072 &
  1.250935$\pm$0.000345 &
  6545 \\ 
 &
  Fully-Connected &
  4 EdgeConv (embed dim: 16) &
  MSE &
  0.49968712 &
  1.785861$\pm$0.042125 &
  2909 \\ 
 &
  Fully-Connected &
  4 EdgeConv (embed dim: 16) &
  Custom $p_T$ &
  0.49823061 &
  1.779759$\pm$0.045607 &
  2909 \\ \midrule
\multirow{5}{*}{GNN ($\eta$ value as a node)} &
  Fully-Connected &
  4 MPL (embed dim: 24) &
  MSE &
  0.20895882 &
  1.146910$\pm$0.024018 &
  5579 \\
 &
  Fully-Connected &
  4 MPL (embed dim: 24) &
  MSE &
  0.16312774 &
  0.992087$\pm$0.002341 &
  5903 \\
 &
  Fully-Connected &
  4 MPL (embed dim: 24) &
  MSE &
  0.21127280 &
  1.145697$\pm$0.000922 &
  6437 \\
 &
  Fully-Connected &
  2 MPL (embed dim: 24) &
  MSE &
  0.20675154 &
  1.133285$\pm$0.019203 &
  6112 \\
 &
  Fully-Connected &
  4 MPL (embed dim: 28) &
  MSE &
  0.14937276 &
  \textbf{0.941620$\pm$0.023914} &
  \textbf{6545} \\ 
 &
  Fully-Connected &
  4 EdgeConv (embed dim: 16) &
  MSE &
  0.99398496 &
  2.987811$\pm$0.012732 &
  2909 \\
 &
  Fully-Connected &
  4 EdgeConv (embed dim: 16) &
  Custom $p_T$ &
  0.99399586 &
  2.989572$\pm$0.042993 &
  2909 \\ \bottomrule[1.5pt]
\end{tabular}%
}
\end{table}

\begin{figure}[!ht] 
\centering
  \subfloat[\label{1a} $p_T$ residuals distribution]{%
       \includegraphics[width=0.5\linewidth]{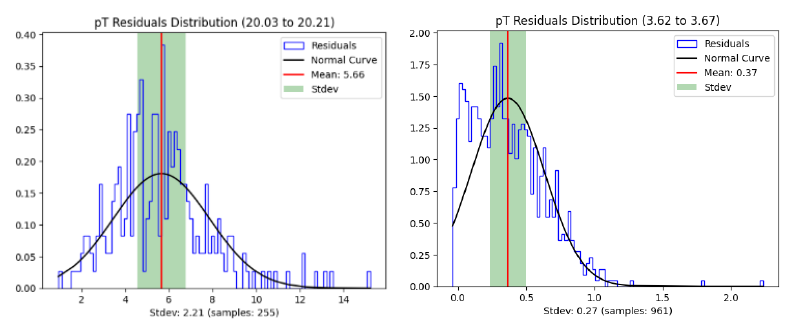}}
    \hfill
  \subfloat[\label{1c} Frequency heatmap of ground truth and predicted $p_T$]{%
        \includegraphics[width=0.47\linewidth]{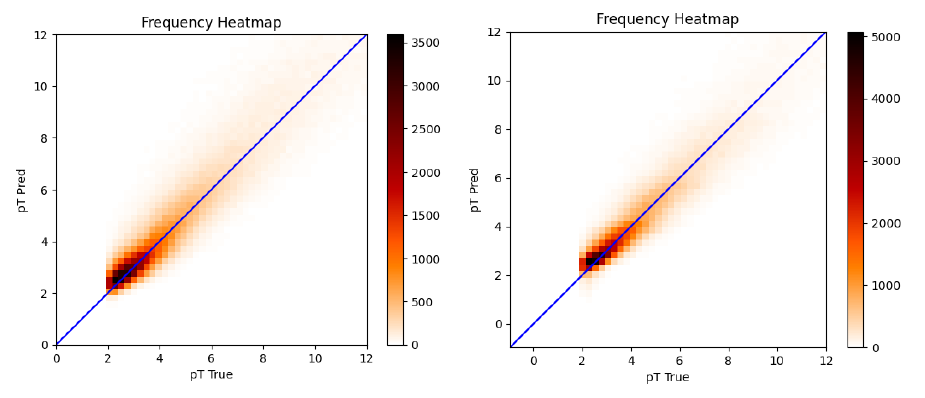}}
  \caption{(a) Gaussian distribution and (b) Frequency heatmap of TabNet (\textcolor{blue}{left}) and best-performing 4 EdgeConv-based (embedding dimension: 16) custom loss function optimized model (right). The EdgeConv model (\textcolor{myred}{right}) demonstrates significantly more concentrated residuals (lower standard deviation, mean closer to zero) and a tighter alignment between predicted and true $p_T$ values in the heatmap, indicating superior predictive accuracy.}
  \label{fig1} 
\end{figure}

\section{Experimental Setup}
\label{experimental_setup}
All experiments were conducted in a controlled environment with access to NVIDIA Tesla P100 GPU acceleration, 16GB VRAM, and standard PyTorch~\cite{pytorch} libraries for DL models and GNNs~\cite{torch_geometric,DGL,torch_geometric_temporal}. The training pipeline was designed to automatically utilize available CUDA-enabled devices for efficient computation. Model checkpoints, logs, and outputs were saved locally during training to ensure reproducibility and consistent evaluation across runs. The software environment was standardized using Python 3.7.12. GNN implementations and operations relied on PyTorch version 1.12.1~\cite{pytorch} in conjunction with the PyTorch Geometric library~\cite{torch_geometric}, version 2.1.0. Data preprocessing tasks, including feature standardization, were conducted using scikit-learn version 1.0.2~\cite{scikit-learn}. We applied a consistent training pipeline across all models, using the Adam optimizer with a learning rate of $1\times10^{-4}$ and weight decay of $5\times10^{-4}$, trained for up to 50 epochs. A `ReduceLROnPlateau'\footnote{\url{https://docs.pytorch.org/docs/stable/generated/torch.optim.lr_scheduler.ReduceLROnPlateau.html}} scheduler dynamically adjusted the learning rate based on validation performance, and early stopping was employed to prevent overfitting. The best-performing weights were saved based on the minimum validation loss.

Datasets were partitioned into training and testing sets. For Methods 1 and 2, which utilized the full dataset of 1,179,356 samples, an 80:20 split resulted in 943,484 training samples and 235,872 testing samples. For Methods 3 and 4, which operated on the outlier-filtered dataset of 1,029,592 samples, a similar 80:20 split yielded 823,673 training samples and 205,919 testing samples (the slight discrepancy from 823,673 is due to rounding in the split). Reproducibility across all experiments was ensured by initializing random number generators with a fixed seed for data splitting and model weight initialization, where applicable.

\subsection{Baseline Model Architectures}
\label{append:baseline}
To benchmark the proposed physics-informed GNNs, we evaluated several standard DL models. These included FCNNs, CNNs, LSTM networks, and TabNet~\cite{tabnet_2021}. All models were trained using MSE as the loss function. We used MAE for evaluation. Each model employed a linear activation function in its output layer for regression.

\subsubsection{TabNet}
TabNet~\cite{tabnet_2021}, a strong baseline for tabular data, was included. Its architecture uses feature-wise attention. It selects features dynamically over multiple decision steps. Each step includes a feature transformer, an attentive transformer, and a feature masking module. The model processed flattened input features. Final predictions were aggregated from all decision steps.

\subsubsection{Fully Connected Neural Network (FCNN)}
The FCNN processes flattened input features through a series of dense layers. It accepts an input tensor of a defined shape (e.g., 31 features if using a combined feature set). The architecture consists of five sequential hidden dense layers: the first with 512 units, the second with 256 units, and the next three layers each with 128 units. All hidden dense layers utilize ReLU activation. After the hidden layers, a final dense output layer with a specified number of units (e.g., a single unit for regression tasks) employs a linear activation function to produce the prediction.

\subsubsection{Convolutional Neural Network (CNN)}
The CNN received input tensors of shape 7 $\times$ 4 $\times$ 1. It began with two 2D Convolutional (Conv2D) layers with 64 and 128 filters, each using 2 $\times$ 2 kernels. Leaky Rectified Linear Unit (LeakyReLU)~\cite{leakyrelu} activation ($\alpha$ = 0.15) followed each Conv2D layer. A MaxPooling2D layer with a pool size of 2 $\times$ 1 was then applied. Four additional Conv2D layers with 256, 256, 128, and 128 filters followed. After flattening, two dense layers with 256 and 128 units were used, both with LeakyReLU activation and Dropout (rate = 0.5). A linear output layer produced the final result.

\subsubsection{CNN-Grid}
The Parameterized CNN-Grid model processes 2D input features (e.g., shape \(7 \times 4 \times 1\)) through a sequence of convolutional and dense layers, with all hidden layers utilizing LeakyReLU activation with an alpha of 0.15. The architecture begins with two initial Conv2D layers: the first with 64 filters and the second with 128 filters, both using \(2\times2\) kernels and `same' padding. This is followed by a MaxPooling2D layer with a pool size of (2,1). Subsequently, a configurable number of additional Conv2D layers (e.g., one additional layer if \( \text{conv} - 1 = 1\), resulting in a total of three Conv2D layers in this block), each with 128 filters, \(2 \times 2\) kernels, and `same' padding, are applied, followed by another MaxPooling2D layer with a pool size of (2,2). After flattening the feature map, the data passes through an initial dense layer with 128 units, followed by a Dropout layer with a rate of 0.5. Then, a configurable number of additional dense layers (e.g., two additional layers if \( \text{FCN} = 2\), for a total of three dense layers in this block), each with 128 units, are applied, with each dense layer followed by a Dropout layer (rate = 0.5). Finally, a dense output layer with a specified number of units (e.g., a single unit for regression) employs a linear activation function to produce the prediction.

\subsubsection{Dual Channel Network (CNN-FCNN)}
The architecture, termed ``Multi,'' processes distinct inputs through two parallel branches using ReLU activation in all hidden layers. The first branch, a CNN, receives input tensors of shape (e.g., \(7 \times 4 \times 1\)) and processes them through three sequential Conv2D layers: the first with 32 filters, the second with 64 filters, and the third with 64 filters, all using \(2 \times 2\) kernels and same padding with ReLU activation. MaxPooling is applied after the second Conv2D layer, followed by a flattening operation. Concurrently, the second branch, an FCNN, processes auxiliary features (e.g., three features) through four sequential dense (FCN) layers with 128, 128, 64, and 64 units, respectively, all using ReLU activation. The outputs from both branches are concatenated and passed through a fusion dense layer with 32 units and ReLU activation. Finally, a single-unit linear dense output layer produces the regression prediction.

\subsubsection{Long Short-Term Memory (LSTM) Network}
The LSTM model treats the input as a sequence with 7 time steps, each containing 4 features. It includes two LSTM layers: the first with 64 units (with \texttt{return\_sequences} enabled), followed by the second with 32 units. A Dropout layer with a rate of 0.2 follows both LSTM layers. A dense layer with 64 units and ReLU activation precedes the final single-unit linear output layer.

\section{Results and Discussion}
\label{result_discussion}
Our exploration of four distinct physics-informed graph construction strategies for GNNs (Table \ref{tab:my-table}) establishes them as highly effective alternatives to DL models, particularly \textit{TabNet} (MAE: 0.9607, $\approx$6.7k parameters), for $p_T$ estimation. These strategies consistently demonstrate improved accuracy with significant parameter efficiency. We chose CNNs, LSTMs, FCNNs, and hybrid architectures as DL baselines due to their proven effectiveness in regression tasks across various domains and their ongoing exploration by CMS for trigger-level applications requiring microsecond-latency and optimized inference~\cite{Feng_2025}. Among six DL baselines, \textit{TabNet} outperformed others by a large margin, so we chose it for further comparisons. Notably, the \textit{station-as-node} graph construction, when paired with an EdgeConv backbone and custom loss, yielded a state-of-the-art MAE of 0.8525 using only $\approx$3k parameters, which is a $\ge55\%$ parameter reduction and superior accuracy compared to all DL baselines, particularly \textit{TabNet}, with its improved prediction precision visually corroborated by Figure \ref{fig1}. Treating each station as a node significantly outperforms the \textit{feature-as-node} variant, likely due to richer 7-dimensional node features, enabling more effective representation learning than the 4-dimensional features in the latter. Moreover, the $\eta$-centric graph strategy, using our novel MPL architecture, also outperforms TabNet, achieving an MAE of 0.9416 with $\approx$6.5k parameters. These findings show that physics-informed GNNs are well-suited for capturing spatial and angular patterns in muon detector data and can deliver accurate, efficient $p_T$ estimation under hardware constraints.

\section{Conclusions}\label{conclusion}
We introduced physics-informed GNNs for real-time $p_T$ estimation in CMS trigger systems. By systematically integrating detector geometry and physics priors into four distinct graph construction strategies, developing a novel MPL, and employing custom domain-specific loss functions, our models significantly outperform established baselines. Notably, a station-informed EdgeConv model achieved a state-of-the-art MAE of 0.8525 with $\approx3k$ parameters, representing a $\ge55\%$ parameter reduction and superior accuracy compared to DL baselines, particularly TabNet. This work underscores the effectiveness of co-designing GNN architectures with domain-specific knowledge, paving the way for robust, accurate, and deployable AI solutions in the demanding, resource-constrained environments of HEP trigger systems.

\paragraph{Limitations}\label{limitations}
This study primarily utilizes simulated data from the CMS Trigger Dataset; performance validation on real detector data, which may introduce additional unmodeled noise and systematic effects, is a necessary next step. While resource-constrained deployment on FPGAs is a key motivation, detailed implementation and on-hardware latency/resource utilization analysis for all proposed GNN architectures (especially the more complex MPL variants) are beyond the current scope and represent an important avenue for future work. The generalizability of the specific graph construction strategies and custom loss functions to other HEP experiments or different particle physics tasks would also benefit from further investigation. Finally, while significant parameter reductions were achieved for the best-performing model, the trade-off between model complexity, achievable latency, and the stringent constraints of real-time trigger hardware warrants continued exploration for broader model deployment.

\paragraph{Broader Impacts}\label{broader_impact}
The primary broader impact of this research lies in advancing real-time data processing capabilities for HEP experiments, such as the CMS at CERN. By enabling more accurate and efficient $p_T$ estimation, particularly in high-pileup environments and on resource-constrained hardware like FPGAs, our work can significantly improve the quality and quantity of data selected by trigger systems. This, in turn, can enhance the potential for new physics discoveries by allowing for more precise event selection at the earliest stages of data acquisition. The development of physics-informed GNNs and efficient model architectures, such as the novel MPL, also offers transferable insights for other scientific domains requiring fast and accurate analysis of complex, structured data under strict computational budgets. While the direct societal impact is research-focused, the methodologies contribute to the broader field of efficient AI. No direct negative societal impacts are foreseen from this specific application, as its primary goal is to improve fundamental scientific research.



\section*{Data availability}
The datasets used in this study are publicly available on Kaggle at the following link: \\
\href{https://www.kaggle.com/datasets/ekurtoglu/cms-dataset}{https://www.kaggle.com/datasets/ekurtoglu/cms-dataset}. 


\section*{Funding sources}
This research did not receive any specific grant from funding agencies in the public, commercial, or not-for-profit sectors.

\section*{Author contributions statement}
\textbf{M.A.J.}: Conceptualization, Methodology, Data curation, Writing - Original Draft Preparation, Software, Visualization, Investigation, Validation.
\textbf{S.S.}: Writing - Original Draft Preparation, Software, Investigation, Validation.
\textbf{M.F.M.}: Supervision, Writing - Review \& Editing.
\textbf{M.M.M. \& M.A.H.}: Supervision, Writing - Review \& Editing.

\section*{Competing interests}
The authors have no conflict of interest, financial or otherwise, to declare relevant to this article.

\section*{Additional information}
Correspondence and requests for materials should be addressed to M.A.J.

\bibliographystyle{elsarticle-num} 
\bibliography{main}

\end{document}